\definecolor{orange}{RGB}{255,127,0}
\newcommand{\eqnref}[1]{Eqn.~(\ref{#1})}
\newcommand{\algref}[1]{Alg.~\ref{#1}}
\newcommand{\propref}[1]{Prop.~\ref{#1}}
\newcommand{\E}{\mathbb{E}}
\newcommand{\R}{\mathbb{R}}
\newtheorem{thm}{Theorem}
\newtheorem{prop}{Proposition}
\newcommand{\btheta}{\boldsymbol{\theta}}
\newcommand{\bx}{\boldsymbol{x}}
\newcommand{\mcD}{\mathcal{D}}
\title{Stochastic Doubly Robust Gradient}
\author{
\vspace*{.05in}
Kanghoon Lee$^{1}$\thanks{Equal contribution.} ~~ 
Jihye Choi$^{2}$\footnotemark[1]\hspace{.055in}\thanks{Work done while at T-Brain.} ~~
Moonsu Cha$^{3}$\footnotemark[2] ~~ 
Jung-Kwon Lee$^{3}$\footnotemark[2] ~~ 
Taeyoon Kim$^{1}$ \\ 
\vspace*{.02in}
$^{1}$ T-Brain, AI Center, SK telecom ~
$^{2}$ Carnegie Mellon University ~
$^{3}$ Superb AI \\
\texttt{khlee@sktbrain.com}, ~
\texttt{jihyec@andrew.cmu.edu}, ~
\texttt{\{mscha,jklee\}@spb.ai}, \\
\texttt{oceanos@sktbrain.com}
}
\begin{document}

\maketitle

\begin{abstract}
When training a machine learning model with observational data, it is often encountered that some values are systemically missing. 
Learning from the incomplete data in which the missingness depends on some covariates may lead to biased estimation of parameters and even harm the fairness of decision outcome.
This paper proposes how to adjust the causal effect of covariates on the missingness when training models using stochastic gradient descent (SGD).
Inspired by the design of doubly robust estimator and its theoretical property of double robustness, we introduce stochastic doubly robust gradient (SDRG) consisting of two models: weight-corrected gradients for inverse propensity score weighting and per-covariate control variates for regression adjustment.
Also, we identify the connection between double robustness and variance reduction in SGD by demonstrating the SDRG algorithm with a unifying framework for variance reduced SGD.
The performance of our approach is empirically tested by showing the convergence in training image classifiers with several examples of missing data.


\end{abstract}
\section{Introduction}

The missing data problem is commonly encountered when training a machine learning model with real world data: unlike in the case of clean-cut experimental data, one or more covariates are often missing in recorded observations. 
Learning from the incomplete data may introduce an undesirable bias, especially when the missingness mechanism is not completely at random. 
More specifically, if the missingness depends on some covariates (e.g., gender, age, religion, and race) involved in generating the data, the estimation based on these unequally collected observations can be significantly different from the ideal result. This does not only interfere with the consistency in the learning process, but may also have a profound effect on the fairness of learning outcome. 

To mitigate this problem, one may want to infer the causal effect of covariates on the missingness mechanism when training models. 
\emph{Doubly robust estimator} \cite{robins1994estimation,rotnitzky1998semiparametric}, first introduced in the area of observational study, has been known as an effective method to deal with such causal missingness and still remains popular \cite{BangRobins.Biometrics05,KangEtal.SS07,RotnitzkyEtal.Biometrika12,han2013estimation,zubizarreta2015stable}. 
It employs two well-known approaches, regression adjustment and inverse propensity score weighting, and by its interesting theoretical property, guarantees that the estimate remains unbiased as long as either of the two is specified correctly. 
In recent years, the double robustness has emerged in wide range of machine learning areas including covariate shift \cite{ReddiEtal.AAAI15}, adversarial training \cite{Kallus.arXiv18}, and reinforcement learning \cite{DudikEtal.ICML11,JiangLi.ICML16,ThomasBrunskill.ICML16,FarajtabarEtal.ICML18}. 

In this paper, we introduce the concept of doubly robust estimator to stochastic gradient descent (SGD) to correct the bias induced by the causal missingness in training data.
Our approach, namely \textit{stochastic doubly robust gradient (SDRG)}, consists of per-covariate control variates and weight-corrected gradients that serve as the methods for regression adjustment and inverse propensity score weighting, respectively. 
To the best of our knowledge, SDRG is the first SGD algorithm that provides the property, double robustness.

Recently, the use of control variate methods have been excessively studied in the literature of variance reduction of SGD for accelerating the convergence \cite{roux2012stochastic,defazio2014saga,JohnsonZhang.NIPS13,wang2013variance,reddi2016stochastic}.
Among these works, stochastic variance reduced gradient (SVRG) \cite{JohnsonZhang.NIPS13} and SAGA \cite{defazio2015new}, which is an unbiased estimate of stochastic average gradient (SAG) \cite{roux2012stochastic}, are closely related to SDRG in that both of them and SDRG can be viewed within a generic framework for control variates of variance reduced SGDs \cite{reddi2015variance}, as SDRG involves the use of per-covariate control variates.
Finding connections between the ways control variates are used, we will show how pursuing double robustness in gradient estimation is aligned with reducing the variance.

Although SDRG and the variance reduced SGDs look similar to each other, there is a notable difference in the situation they can handle: each gradient estimate should be weighted unequally to reflect the missingness in training data and the weight-corrected gradients in SDRG are devised to address this need, whereas the aforementioned variance reduction methods can only consider the equal contribution of individual gradients.
The SDRG algorithm can be straightforwardly applied to practical scenarios such as class imbalance problem, as we demonstrate in this paper considering some contextual information (such as class-label or any kind of tag) of training data as the covariates.




In summary, our contributions are as follows.
\renewcommand\labelitemi{\tiny$\bullet$}
\begin{itemize}[leftmargin=*] \itemsep1pt
  \item We propose the first doubly robust SGD algorithm, called SDRG, and demonstrate that SDRG can be devised in much the same way of SAGA and SVRG.
  \item We define per-covariate momentum functions as control variates of SDRG, and show that it does not require to either periodically calculate (as SVRG) or store (as SAGA) the full gradients.
  \item We provide a relation between SDRG and momentum, which is a much more direct derivation than the previous relationship presented in \cite{roux2012stochastic}.
  \item We experimentally show the performance of SDRG in training image classifiers with class-imbalanced MNIST and fashion-MNIST datasets since they are simple, yet commonly arising form of missing data problems.
\end{itemize}

To clarify, we remark that our work is not aligned with the approaches that employ non-uniform importance sampling for variance reduction in SGD based \cite{NeedellEtal.NIPS14,zhao2015stochastic,KatharopoulosFleuret.ICML18,kern2016svrg++,shen2016adaptive}. Rather than proposing a sampling criteria, the purpose of our work is to develop a robust learning algorithm when the weights are already determined with regards to the causal missingness.

\section{Background and Related Work}
A principled optimization problem in modern machine learning is that of the \textit{finite-sum} form: minimization of an objective function $f(\btheta)$ that are naturally expressed as a summation over a finite set of data $\mcD = \{ x_i \}_{i=1}^n$, which is described as,
\begin{align}
  \label{eq:finite-sum}
  \min_{\btheta \in \mathbb{R}^d} f(\btheta) 
  := \frac{1}{n} \sum_{i=1}^n w_i f_i(\btheta)
\end{align}
where $\btheta$ is a parameter to be optimized over, each term $f_i(\btheta)$ contributes with the weight $w_i$, and $w_i = 1$ for typical setup. 
Such objective  in \eqnref{eq:finite-sum} commonly appears in the empirical risk minimization framework where the objective is the average of losses computed over the data in $\mcD$, that is, $f_i(\btheta) := L(\btheta; x_i)$

Stochastic gradient descent (SGD) is a method of choice to deal with such optimization problem. It iteratively updates the design parameter as follows: for each training iteration $t=1,2,...,T$,
\begin{align}
  \nonumber
  \btheta^{t+1} &= \btheta^{t} - \eta ~ \Delta \btheta^{t} \\
  \nonumber
  \Delta \btheta^{t} &= \nabla f_{i_t}(\btheta^{t})
\end{align}
where $\eta>0$ is a learning rate, $i_t \in \{1,...n\}$ and $f_{i_t}(\cdot)$ is the loss computed with $x_{i_t}$ which is drawn iteratively from a training set $\mathcal{D}$.

In recent years, a class of algorithms to improve the convergence of SGD by reducing the variance of the estimates has been proposed \cite{roux2012stochastic,JohnsonZhang.NIPS13,wang2013variance,defazio2014saga}.
Especially, Reddi et al. \cite{reddi2015variance} provides a formal unifying framework as \algref{alg:generic_cv} for stochastic variance reduction methods proposed in the literature, including SVRG \cite{JohnsonZhang.NIPS13}, SAGA \cite{defazio2014saga}, and SAG \cite{roux2012stochastic}.
The basic idea behind the variance reduction methods is to augment the gradient with a control variate and its expectation as,
\begin{align}
  \label{eq:sgd-vr}
  \Delta \btheta^{t}
  & = \nabla f_{i_t}(\btheta^{t}) - g_{i_t}(\tilde\btheta) 
  + \E[g_{i}(\tilde\btheta)]
\end{align}
where $\tilde{\btheta}$ is an approximation of $\btheta$.
The resulted estimate is unbiased, and has smaller variance if $g_{i_t}(\tilde{\btheta})$ has a high correlation with the target estimate $\nabla f_{i_t}(\btheta^t)$. 

\begin{algorithm}[t]
  \caption{
  Generic Control Variate Method in SGD}
  \label{alg:generic_cv}
  \textbf{Initialize: } 
  $\btheta^{0} \in \mathbb{R}^d$, $\tilde\btheta = \btheta^{0}$,
  $\forall i: g_i(\tilde\btheta) = 0$,
  $\eta>0$

\begin{algorithmic}[1]
  \FOR{$t = 0, ..., (T-1)$}
    \STATE (Uniform-) randomly pick an $i_{t} \in \{1,...,n\}$
    \STATE Compute the surrogate estimation of $\Delta \btheta^{t}$: \\
    \quad $\Delta \btheta^{t}
    = \nabla f_{i_t}(\btheta^{t}) - g_{i_t}(\tilde\btheta)
    + \frac{1}{n} \sum_{i=1}^n g_{i_t}(\tilde\btheta)$
	\STATE Update the parameter $\btheta^{t+1}$: \\
    \quad $\btheta^{t+1} \leftarrow \btheta^{t} - 
    \eta ~ \Delta \btheta^{t}$
    \STATE \label{alg:sch} Update the schedule 
    $g_{i_t}(\cdot)$ and/or $\tilde\btheta$: \\
	\quad Option I (SVRG): 
    update  $g_i(\cdot)$, $\tilde\btheta$ using \eqnref{eq:svrg-cv} \\
	\quad Option II (SAGA): 
    update $g_{i_t}(\tilde\btheta)$ using \eqnref{eq:saga-cv}
  \ENDFOR
  \RETURN $\btheta^T$
\end{algorithmic}
\end{algorithm}

As studied in \cite{reddi2015variance}, the mechanisms of updating control variates, $\{g_i(\tilde{\btheta})\}_{i=1}^n$ can be arranged within the unifying framework  (see line 5 of \algref{alg:generic_cv}) for the well-known variance reduction methods:

\paragraph{SVRG}
Parameter $\tilde{\btheta}$ is updated after every $m$ iterations as
\begin{align}
  \nonumber
  g_i(\cdot) 
  & = \nabla f_i(\cdot) ~ \textnormal{ for all } i \\
  \label{eq:svrg-cv}
  \tilde\btheta & =
  \begin{cases}
    ~ \btheta^{t}
    & \textnormal{ if $t \textnormal{ mod } m = 0$} \\
    ~ \tilde\btheta
    & \textnormal{ otherwise}
  \end{cases}
\end{align}

\paragraph{SAGA} The gradients of all functions $\{\nabla f_i(\btheta)\}_{i=1}^n$ are kept in memory, and one of them corresponding to the training instance is updated at every iteration as
\begin{align}
  \label{eq:saga-cv}
  g_i(\tilde\btheta) & =
  \begin{cases}
    ~ \nabla f_i(\btheta^{t})
    & \textnormal{ if $i = i_t$} \\
    ~~~ g_i(\tilde\btheta)
    & \textnormal{ otherwise} ~.
  \end{cases}
\end{align} 
For SAG, the only difference with SAGA is that the line 3 of \algref{alg:generic_cv} is changed into,
\begin{align}
  \label{eq:sag}
  \Delta \btheta^{t}
    = \frac{1}{n} \left[ \nabla f_{i_t}(\btheta^{t}) - g_{i_t}(\tilde\btheta) \right]
    + \frac{1}{n} \sum_{i=1}^n g_{i_t}(\tilde\btheta)
\end{align}
One may notice that SAG update rule does not exactly fit in the formulation of \eqnref{eq:sgd-vr} since the last term in \eqnref{eq:sag} does not become an expectation of the control variate by the scale of $1/n$. However, we categorize SAG as control variate-based variance reduction methods along with other methods, since they are all similar in the sense of incorporating an additional parameter to reduce the variance of estimates.


The aforementioned approaches are originally under strong convexity assumptions and has been extended to non-convex optimization problems \cite{allen2016variance,reddi2016stochastic}. 
Asynchronous \cite{reddi2015variance,meng2016asynchronous,huo2017asynchronous}, proximal \cite{xiao2014proximal,allen2017katyusha} and accelerated variants have also been proposed. 


\section{Stochastic Doubly Robust Gradient}
\label{control-variate}

Before to introduce our main algorithm, we begin by formalizing the notion of weighted finite-sum problem that we are interested in and introduce notation that we use throughout this paper.

\subsection{Problem Setting}

We consider the cases where the individual loss term,  $f_i$, in \eqnref{eq:finite-sum} contributes unequally to the optimization, and thus, should be weighted differently according to certain criteria (i.e., $w_i \neq 1$). 
The weight $w_i$ is defined by the generic importance sampling literature \cite{WeissEtal.AAAILBD13,ThomasBrunskill.AAAI17,DoroudiEtal.UAI17}: 
when training and testing data come from different distributions, it can be specified to correct the difference.
To elaborate, we are only given a collected set of data from a distribution $q$ (which we call the \textit{sampling distribution}).
We originally intend to compute the expected loss over some \textit{proper} distribution $p$ (which we refer to as the \textit{target distribution}), that is, $\E_{p}[f(\btheta;\bx)]$, $\bx \sim q$.
Since we may not have direct access to $p$, however, we want to do a finite-sum approximate to the expectation over samples $\{\bx_i\}_{i=1}^n, ~ \bx_i \in \mathcal{\R}^d$ drawn from the distribution at hand: 
\begin{align}
  \label{eq:finite-sum-is}
  \nonumber
  \E_{p}[f(\btheta)] 
  & = \E_{q} \left[
  \frac{p(\bx)}{q(\bx)} f(\btheta;\bx)
  \right] \\
  & = \frac{1}{n} \sum_{i=1}^n 
  \frac{p(\bx_{i})}{q(\bx_{i})} f_i(\btheta)
  ~ , ~~~ \bx_{i} \sim q
\end{align}
where $p(\bx_{i})/q(\bx_{i})$, the ratio between target distribution and sampling distribution is regarded as the weighting factor $w_i$.

\subsection{Weight-Corrected Gradient with Variance Reduction}

To solve the weighted finite-sum problem as \eqnref{eq:finite-sum-is}, in the standard SGD algorithms, the $t$-th iteration involves picking an instance from sampling distribution $q$ over all instances and updates parameters as
\begin{align}
  \label{eq:sgd-iw}
  \Delta \btheta^{t}
  & = w_{i_t} \nabla f_{i_t}(\btheta^{t})
\end{align}
where $w_{i_t}:=p(\bx_{i_t})/q(\bx_{i_t})$ is per-sample importance weight.
This weight-corrected gradient can be thought of as an inverse propensity score estimate, and we call it \textit{importance weighted SGD}.

Within this literature of doubly robust estimator, our goal is to reduce the variance of stochastic gradient algorithm by introducing a control variate method to accelerate SGD.
Given \eqnref{eq:sgd-vr} and \eqnref{eq:sgd-iw}, an intuitive approach to employ a control variate to stochastic weighted gradient descent is as follows,
\begin{align}
  \label{eq:weighted-cv}
  \Delta \btheta^{t}
  & = w_{i_t} \nabla f_{i_t}(\btheta^{t})
  - w_{i_t} g_{i_t}(\tilde{\btheta})
  + g(\tilde{\btheta})
\end{align}
where $\tilde\btheta$ is required to be highly correlated with $\btheta$ and $g(\tilde{\btheta}) := \E_{p}[g_i(\tilde{\btheta})]$. 
The resulted estimate is unbiased as the stochastic weighted gradient in \eqnref{eq:sgd-iw} with reduced  variance. 



The constructed gradient estimator in \eqnref{eq:weighted-cv} involves two variables: per-sample importance weight $w_{i_t}$ and control variate $g_{i_t}$.
In other words, the estimation accuracy of our approach relies on how correctly the two could be specified. 
From this perspective, we see an advantage of our formulation by observing that either one of two models needs to be correctly specified to obtain an unbiased estimator:


Seen in a broader context, such distinguishing property of our formulation in \eqnref{eq:weighted-cv} arises in many areas. 
Indeed, in the area of observational study, the property called \textit{double robustness} has been well studied \cite{robins1994estimation,BangRobins.Biometrics05,KangEtal.SS07,RotnitzkyEtal.Biometrika12}: doubly robust (DR) estimators involves models for both the propensity score and the conditional mean of the outcome, and remain consistent even if one of those models (but not both) is misspecified.
By observing that the constitution and the property of DR estimator are similar with that of our constructed gradient estimator, we see an opportunity to bring the insights of DR estimation into stochastic gradient optimization. 

\begin{thm}
  \label{thm:dr-estimator}
  The weight-corrected gradient with variance reduction in \eqnref{eq:weighted-cv} satisfies the double robustness. Thus, \eqnref{eq:weighted-cv} is a doubly robust gradient estimator.
\end{thm}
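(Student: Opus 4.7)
The plan is to verify the double-robustness property directly, by computing $\E_{i_t\sim q}[\Delta\btheta^t]$ for the estimator in \eqnref{eq:weighted-cv} under two complementary hypotheses: (i) the importance weights are correctly specified, $w_i = w_i^\star := p(\bx_i)/q(\bx_i)$; or (ii) the control variate is correctly specified, $g_i(\tilde\btheta) = g_i^\star(\tilde\btheta) := \nabla f_i(\btheta^t)$ for every $i$. In each case the goal is to recover the target $\E_p[\nabla f_i(\btheta^t)]$, and combining the two cases gives the DR property. Since neither condition is assumed to hold \emph{jointly}, this is exactly the usual ``either-but-not-both'' formulation in the observational-study literature referenced in the paragraph preceding \thmref{thm:dr-estimator}.

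The first step is to take $\E_q$ of \eqnref{eq:weighted-cv} and split by linearity:
\begin{align*}
\E_q[\Delta\btheta^t]
= \E_q[w_{i_t}\nabla f_{i_t}(\btheta^t)]
- \E_q[w_{i_t}\, g_{i_t}(\tilde\btheta)]
+ g(\tilde\btheta),
\end{align*}
using $g(\tilde\btheta) := \E_p[g_i(\tilde\btheta)]$ by construction of the control term. In case (i), the change-of-measure identity from \eqnref{eq:finite-sum-is} converts any $\E_q[w_{i_t}^\star\,\phi_{i_t}]$ into $\E_p[\phi_i]$: the first term becomes the target $\E_p[\nabla f_i(\btheta^t)]$, and the second becomes $\E_p[g_i(\tilde\btheta)] = g(\tilde\btheta)$, cancelling the third term regardless of the specific form of $g_i$. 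In case (ii), substituting $g_i(\tilde\btheta) = \nabla f_i(\btheta^t)$ pointwise in $i_t$ collapses the first two terms inside $\E_q$ regardless of the specific weights $w_{i_t}$, and the third term reduces to $g(\tilde\btheta) = \E_p[\nabla f_i(\btheta^t)]$, again the target. Either condition is therefore sufficient for unbiasedness, which is exactly double robustness.

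The computation itself is short; the real difficulty is conceptual, namely pinning down what ``correctly specified'' should mean for the control variate when $\tilde\btheta \neq \btheta^t$. The cleanest reading mirrors the outcome-regression component of classical DR estimation: $g_i$ is correct when its value at the stale iterate $\tilde\btheta$ equals the true per-sample gradient at the current iterate, $\nabla f_i(\btheta^t)$. This interpretation is precisely what makes the cancellation in case (ii) go through and justifies viewing \eqnref{eq:weighted-cv} as the SGD analogue of the DR estimator that combines inverse propensity-score weighting with regression adjustment. Once this convention is fixed, the remaining argument is a two-line importance-sampling calculation.
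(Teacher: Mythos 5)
Your proof is correct and follows essentially the same route as the paper's: a case split on which of the two models (importance weights or control variate) is correctly specified, with the change-of-measure identity from \eqnref{eq:finite-sum-is} handling the weight-correct case and direct cancellation handling the model-correct case. If anything, your insistence on the pointwise condition $g_i(\tilde\btheta)=\nabla f_i(\btheta^t)$ is slightly more careful than the paper's stated condition $\E[\nabla f_{i_t}(\btheta^{t}) - g_{i_t}(\tilde\btheta)]=0$, which on its own does not make the weighted residual $w_{i_t}\{\nabla f_{i_t}(\btheta^{t}) - g_{i_t}(\tilde\btheta)\}$ mean-zero unless the weights are uncorrelated with the residual.
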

\begin{proof}
  First, we rewrite \eqnref{eq:weighted-cv} as follows:
  \begin{align*}
    \Delta \btheta^{t} 
    & = w_{i_t} \nabla f_{i_t}(\btheta^{t}) 
    - \left\{ 
    w_{i_t} g_{i_t}(\tilde{\btheta}) - g(\tilde{\btheta}) 
    \right\} \\
    & = w_{i_t} \left\{
    \nabla f_{i_t}(\btheta^{t}) - g_{i_t}(\tilde{\btheta})
    \right\} + g(\tilde{\btheta})
  \end{align*}
  Since $\E[ w_{i_t} g_{i_t}(\tilde{\btheta})]$ is equal to $g(\tilde{\btheta})$ and $\E[\nabla f_{i_t}(\btheta^{t}) - g_{i_t}(\tilde{\btheta})] = 0$, we can observe that $\Delta \btheta^{t}$ is unbiased if either $w_{i_t}$ is accurately estimated or the model $g_{i_t}(\tilde\btheta)$ is correct.
  Therefore, we can conclude that \eqnref{eq:weighted-cv} is a doubly robust estimator. \\
\end{proof}


In the statistics community, and particularly in causal inference settings, DR estimators provide an estimation on average causal effect from observational data, adjusting appropriately for confounders. 
The ability of DR estimators to taking account to the causal effect of confounders can be also useful in the general machine learning literature. 
For instance, in supervised machine learning, the goal is to seek a function $h: X \rightarrow Y$, given $n$ pairs of inputs and corresponding target outputs $\{\bx_i,y_i\}_{i=1}^n$.
Meanwhile, there often exist contextual information associated with instances $\bx_i$, and it is not directly used to compute the objective (loss) value but may indirectly influence the process of learning the relation between $\bx_i$ and $y_i$.
In that case, one may want to address causal effect of contextual information in the process of learning the relation between $\bx_i$ and $y_i$.
By regarding the contextual information as confounding factors in the approximation of the expected loss over training set, 
we propose a method to adjust the causal effect of contextual information that may confound the gradient estimation.

\subsection{Confounded Mini-Batch Gradient}
We maintain different models that estimate two key parameters of \eqnref{eq:weighted-cv} – importance weights and control variates, conditioned on each configuration of contextual information.
In practice, the contextual information could represent a class-label or any tag associated with $x_i$ \cite{Gopal.ICML16}. 
In this paper, we decide to use class-label as contextual information as they are almost always available. 
Given a finite set of class-labels as the observed contextual information, we confine our interest to class imbalance problem, one of the most common scenario we may encounter in classification tasks.  
By taking a mini-batch variant of our estimator, we view the class imbalance problem in the perspective of importance weighting: if the data set collected in the mini-batch is sampled from highly skewed distribution, we want to correct the difference between the skewed distribution and the target distribution which is assumed to be uniformly balanced in terms of classes. 
Let $I_t$ be the set of indices of mini-batch instances at training iteration $t$, $I_{t, c}$ is a disjoint subset of $I$, whose instances belong to class $c$, $w_c(\cdot)$ is per-class model for estimating importance weights and $g_{t,c}(\cdot)$ is per-class control variates. Then, our proposed algorithm for class imbalance problem is described as, 
\begin{align}
  \nonumber
  \Delta \btheta^{t}
  & = \frac{1}{C} \sum_{c=1}^C \Bigg[
  \frac{1}{|I_{t,c}|} \sum_{i_{t,c} \in I_{t,c}}
  w_c(i_{t_c}) ~ \nabla f_{i_{t,c}}(\btheta^{t})
  - \frac{1}{|I_{t,c}|} \sum_{i_{t,c} \in I_{t,c}}
  w_c(i_{t_c}) ~ g_{i_{t,c}}(\tilde\btheta_c) + g(\tilde\btheta_c)
  \Bigg]
\end{align}
We call this method \textit{stochastic doubly robust gradient (SDRG)}.
In practical implementations, an intuitive way of setting importance weights in the above setting is to compute the proportion of the number of instances that belong to each class-label over the mini-batch size.

We show that the existing variance reduction methods are related to SDRG in the sense their mechanism of updating control variates and computing the expectation of them (line 5 of \algref{alg:generic_cv} can be directly used in SDRG update rule) as follows:
\begin{align*}
  \tilde\btheta_{c} 
  & = \begin{cases}
    ~ \btheta^{t}
    & \textnormal{ if $t \textnormal{ mod } m = 0$} \\
    ~ \tilde\btheta_{c}
    & \textnormal{ otherwise}
  \end{cases}
\end{align*}
where $m$ is the parameter update frequency and $\tilde\btheta_c$ is initialized by $\btheta^{0}$ for all $c$, and
\begin{align*}
  g^t(\tilde\btheta_{c})
  & = \eta ~ \frac{1}{|I_{t,c}|} \sum_{i_t \in I_t} 
  \nabla f_{i_{t,c}}(\btheta^{t}) 
  + \gamma ~ g^{t-1}(\tilde\btheta_{c})
\end{align*}
where $g^0(\tilde\btheta_c)$ is initialized by 0 for all $c$.

\section{Relation to Momentum}
\label{doubly-robust}

In practical implementations, it is natural to take $\tilde\btheta$ as the average or a snapshot from the past iterations \cite{JohnsonZhang.NIPS13}. However, we propose to set $\tilde\btheta$ as a geometric weighting of previous gradients and by doing so, show an simple analysis on the relation between the above formulation and momentum optimizer \cite{Qian.NN99}: the proposed control variate method which is described as,
\begin{align*}
  \btheta^{t+1} 
  & = \btheta^{t} - \Delta \btheta^{t} \\
  & = \btheta^{t} - \left(
  w_{i_t} \nabla f_{i_t}(\btheta^{t})
  - w_{i_t} g_{i_t}(\tilde{\btheta}) 
  + g(\tilde{\btheta})
  \right)
\end{align*}
reduces to the formulation of momentum under a special setting where $g_{i_t}(\tilde\btheta) = g(\tilde\btheta) = \frac{\gamma}{1-\eta} \Delta \btheta^{t-1}$ and $w_{i_t} = \eta ~ (\neq 1)$, as follows,
\begin{align}
  \label{eq:momentum}
  \btheta^{t+1} & = \btheta^{t} - \left(
  \eta ~ \nabla f_{i_t}(\btheta^{t}) + \gamma ~ \Delta \btheta^{t-1} 
  \right)
\end{align}
where $\gamma$ is a momentum coefficient. 
In other words, the control variate method with constant importance weight has the exact same formulation with momentum update rule.

\begin{prop}
  \label{prop:momentum}
  The weight-corrected gradient with variance reduction in \eqnref{eq:weighted-cv} generalizes the momentum update rule in \eqnref{eq:momentum} into the cases of importance weighting.
\end{prop}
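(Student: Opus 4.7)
The plan is to directly verify the claim by substitution: take the general SDRG update rule in~\eqnref{eq:weighted-cv} and plug in the specific choices of the control variate and the importance weight hinted at just before the proposition statement, namely $g_{i_t}(\tilde\btheta) = g(\tilde\btheta) = \tfrac{\gamma}{1-\eta}\,\Delta\btheta^{t-1}$ and $w_{i_t} = \eta$, and then show that the resulting update coincides line-for-line with the momentum iteration in~\eqnref{eq:momentum}. Since the proposition is essentially a syntactic identification of two update rules under a particular parameterization, the proof should be very short and computational.

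More concretely, the first step I would take is to start from
\[
\Delta\btheta^{t} = w_{i_t}\nabla f_{i_t}(\btheta^{t}) - w_{i_t} g_{i_t}(\tilde\btheta) + g(\tilde\btheta),
\]
and substitute $g_{i_t}(\tilde\btheta) = g(\tilde\btheta) = \tfrac{\gamma}{1-\eta}\Delta\btheta^{t-1}$ together with $w_{i_t} = \eta$. The two control-variate terms then combine as $-\eta\cdot\tfrac{\gamma}{1-\eta}\Delta\btheta^{t-1} + \tfrac{\gamma}{1-\eta}\Delta\btheta^{t-1} = \gamma\,\Delta\btheta^{t-1}$, so that $\Delta\btheta^{t} = \eta\,\nabla f_{i_t}(\btheta^{t}) + \gamma\,\Delta\btheta^{t-1}$. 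Applying the update $\btheta^{t+1} = \btheta^{t} - \Delta\btheta^{t}$ reproduces exactly the momentum update rule~\eqnref{eq:momentum}, which finishes the proof.

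The second step is to explain in what sense this constitutes a \emph{generalization}: in this special case, the per-sample importance weight degenerates to the constant $\eta$ (independent of $i_t$), meaning the target and sampling distributions are treated as identical, and the control variate is forced to a purely history-based quantity that does not actually depend on $i_t$. Once either assumption is relaxed -- either by letting $w_{i_t}$ track a genuine ratio $p(\bx_{i_t})/q(\bx_{i_t})$ or by letting $g_{i_t}(\tilde\btheta)$ genuinely depend on the sampled index -- one recovers the strictly richer SDRG update in~\eqnref{eq:weighted-cv}. This is the "generalization" part of the claim.

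There is no real technical obstacle here; the only subtlety worth flagging is the coefficient $\gamma/(1-\eta)$, which is chosen precisely so that the $w_{i_t} g_{i_t}(\tilde\btheta)$ and $g(\tilde\btheta)$ contributions combine to the desired $\gamma\,\Delta\btheta^{t-1}$. I would make this coefficient choice explicit in the write-up so that the reader sees that momentum is recovered only because the importance weight and the control-variate coefficient are matched. Beyond that, the proof is a one-line algebraic verification that requires no additional tools beyond~\eqnref{eq:weighted-cv} and~\eqnref{eq:momentum}.
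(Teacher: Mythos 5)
Your proposal is correct and matches the paper's own argument: the paper establishes \propref{prop:momentum} by exactly the same substitution $w_{i_t}=\eta$, $g_{i_t}(\tilde\btheta)=g(\tilde\btheta)=\frac{\gamma}{1-\eta}\Delta\btheta^{t-1}$ into \eqnref{eq:weighted-cv}, combining the two control-variate terms into $\gamma\,\Delta\btheta^{t-1}$ to recover \eqnref{eq:momentum}. Your added remarks on why the coefficient $\gamma/(1-\eta)$ is needed and in what sense this is a generalization are consistent with (and slightly more explicit than) the paper's discussion.
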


From the perspective of the classic control variate schemes that involves an additional parameter and its expectation, momentum method can be regarded as a biased estimator since the expectation of weighted control variates $w_{i_t} g_{i_t}$ does not correspond to $g(\tilde\btheta)$.
The interpretation on the momentum as a biased estimator can be also find in SAG of \eqnref{eq:sag} and from this observation, we see a connection between momentum and SAG using our control variate formulation. 
It is noteworthy that there has been an attempt to find a relation between SAG and momentum optimizer:


in the work of \cite{roux2012stochastic}, \eqnref{eq:sag} and \eqnref{eq:momentum} methods can be expressed in the following formulation,
\begin{align*}
  \textnormal{SAG}
  & :\quad
  \btheta^{t+1}
  = \btheta^{t} + \eta \sum_{j=1}^t 
  S(j,i_{1:t}) ~ \nabla f_{i_t}(\btheta^{j}) \\
  \textnormal{Momentum}
  & :\quad 
  \btheta^{t+1}
  = \btheta^{t} + \eta \sum_{j=1}^t 
  (\gamma \eta)^{t-j} ~ \nabla f_{i_t}(\btheta^{j})
\end{align*}
where $S(i,i_{1:t})$ is the selection function and equal to $1/n$ if $j$ corresponds to the last iteration where $j = i_t$ and is set to $0$ otherwise. Roux et al. \cite{roux2012stochastic} finds a connection of SAG on momentum method by showing that they can be written in a similar formation. 
However, we provide an simpler but stronger analysis by showing a direct connection of SAG on momentum method by proving that they reduce to the exactly same form of equation (\propref{prop:momentum}).

\section{Experiments}
In this section, we experiment with SDRG in comparison with importance weighted gradient descent for classification task with MNIST \cite{Lecun.98} and Fashion-MNIST \cite{xiao2017fashion} datasets.
Both of MNIST and Fashion-MNIST are well balanced for 10 classes, but as we want to test the convergence rate in the setting of class imbalance problem, we modify the sampling distribution to make the sampled instances in mini-batches to be highly unbalanced in terms of class-label.
Assuming that we want the training instances are uniformly distributed for all classes, we correct the gradient computed over the mini-batch samples drawn from the skewed distribution by employing importance weighting.
In such cases where importance weighting is employed, we demonstrate that our SDRG which augments the importance weighted stochastic gradient with control variates whose expectation is replaced by a momentum, shows empirical improvements on the convergence rate.

\begin{figure*}
  \centering
  \begin{tabular}{cc}
    \subfloat[\label{fig:mnist}][Sampling distribution is skewed consistently for a single class (i.e. class 0)]
    {\hspace{-5mm}
    \includegraphics[width=.49\linewidth]{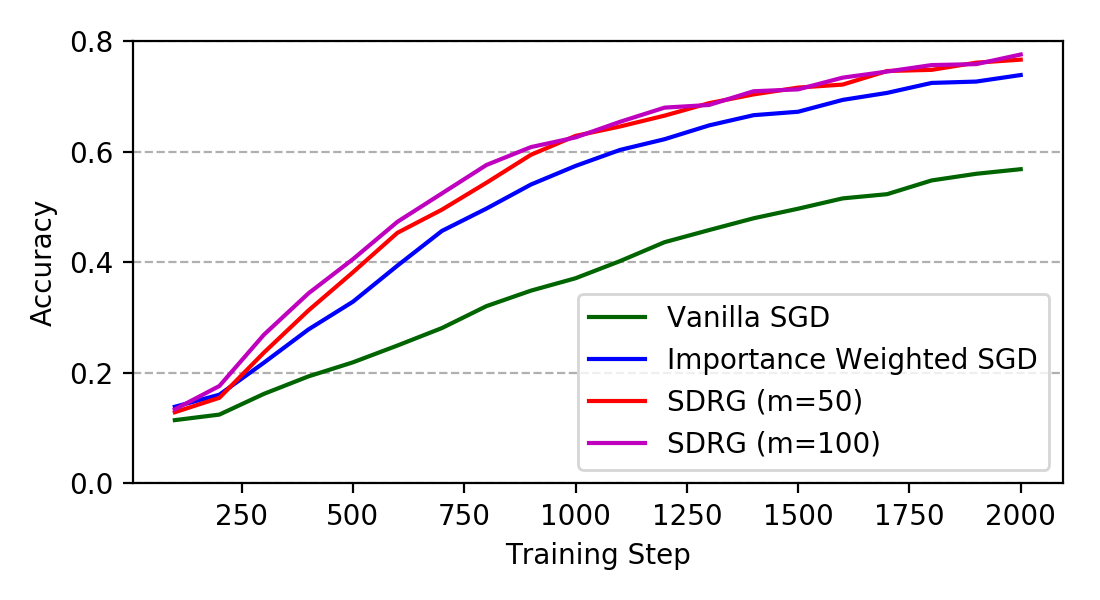}} 
    ~&~
    \subfloat[\label{fig:asd}][Sampling distribution is skewed for the classes in turn (i.e. class 0, 1, 2, ...)]
    {\includegraphics[width=.49\linewidth]{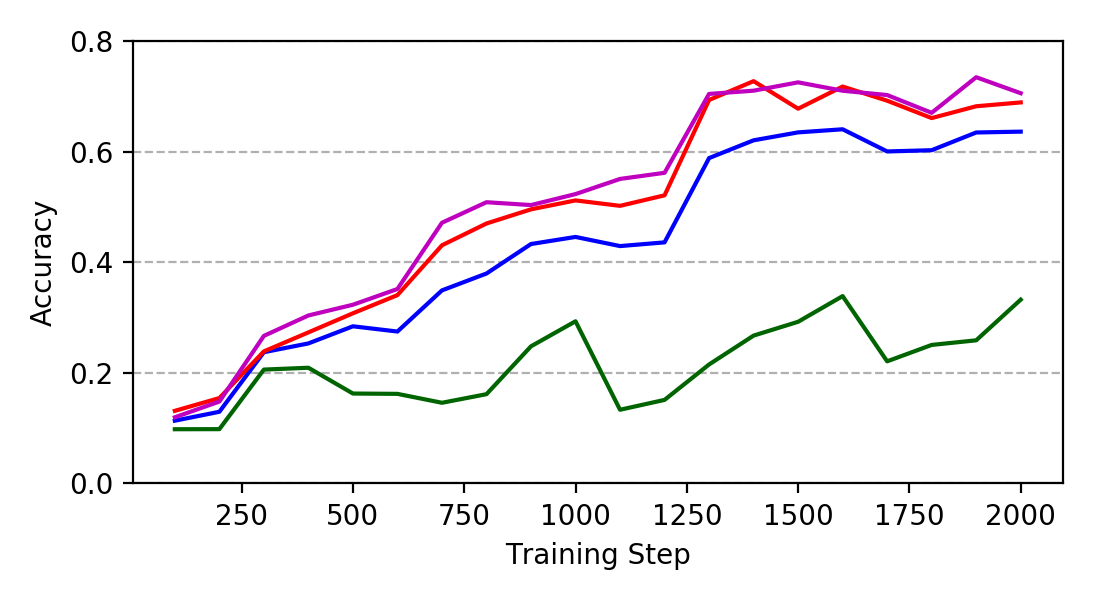}}
  \end{tabular}
  \caption{Accuracy vs training timestep in MNIST}
\label{fig:mnist_results}
\end{figure*}

\paragraph{MNIST} MNIST is a large database of handwritten digits from 0 to 9 that is commonly used to evaluate various machine learning algorithms. The images are grayscale with size of 28 x 28 pixel and they are uniformly balanced for all 10 classes.

We devise two different mechanisms to generate settings for the class imbalance problem: (a) first, the sampling distribution is skewed consistently for a single class during the entire training process. For instance, the probability for the instances from class 0 to be sampled is forced as 0.8, where the instances from the rest of 9 classes are sampled uniformly. (b) Otherwise, the class to be skewed with the sampling probability of 0.8 is selected in turn from class 0, 1, 2, ... to 9. 

We test the performance of SDRG update rule which can be specifically written as the follows for 10 class classification task:
Let $I_t$ be a set of indices for mini-batch samples at $t$ th training iteration, and $I_{t,c}$ be the disjoint subset of $I_t$ that is the set of indices of instances that belong to class-label $c$.
\begin{align*}
  \Delta \btheta^{t}
  = \frac{1}{10} \sum_{c=0}^9 \Bigg[
  \frac{1}{|I_{t,c}|} \sum_{i_t \in I_t}
  \nabla f_{i_{t,c}}(\btheta^{t}) 
  - \frac{1}{|I_{t,c}|} \sum_{i_t \in I_t}
  g_{i_{t,c}}(\tilde\btheta_{c}) + g^t(\tilde\btheta_{c})
  \Bigg]
\end{align*}

We compare the performance of SDRG with two algorithms: importance weighted gradient descent which is described as,
\begin{align*}
  \Delta \btheta^{t}
  & = \frac{1}{10} \sum_{c=0}^9
  \frac{1}{|I_{t,c}|} \sum_{i_{t,c} \in I_{t,c}}
  \nabla f_{i_{t,c}}(\btheta^{t})
\end{align*}
and vanilla SGD.

To compare SDRG and other algorithms, we train a neural network (with one fully-connected hidden layer of 100 nodes and ten softmax output nodes) using cross-entropy loss with mini-batches of size 20 and learning rate of 0.01. 
We evaluate the performance of SDRG with different frequency of updating $\tilde\btheta$: $m =50, 100$. For momentum parameters, $\gamma =0.9$ and $\eta =0.1$ are used.
And we add two extra parameters $\alpha$ ($0.5$ for (a) and $1.5$ for (b)) and $\beta$ ($1.5$ for (a) and $0.5$ for (b)) for additional weighting sample gradients and control variate function, respectively. These parameters are corresponding to weights of $w_i=\eta$ and $g_{i_t}(\tilde\btheta)=g(\tilde\btheta)=\frac{\gamma}{1-\eta} \Delta \btheta^{t-1}$ in \eqnref{eq:momentum}. The results are all generated by taking the average from 20 runs of experiments. The confidence intervals are too insignificant to be noted and we decided not to include them in the figures.

\begin{figure*}
  \centering
  \begin{tabular}{cc}
    \subfloat[\label{fig:fmnist}][Sampling distribution is skewed consistently for a single class (i.e. class 0)]
    {
    \hspace{-5mm}
    \includegraphics[width=.49\linewidth]{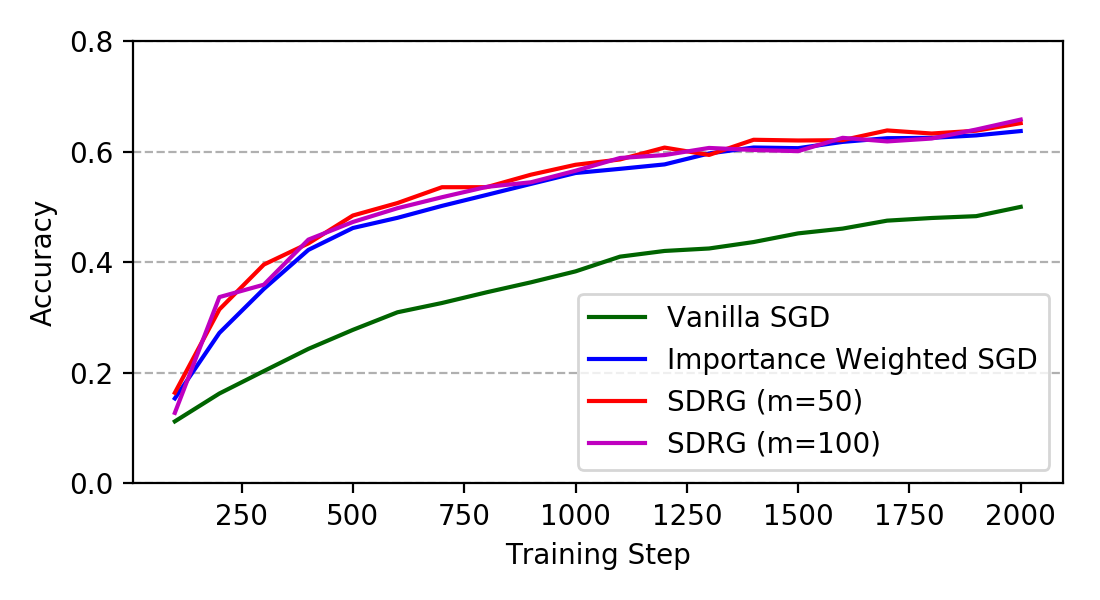}} 
    ~&~
    \subfloat[\label{fig:asdasdf}][Sampling distribution is skewed for the classes in turn (i.e. class 0, 1, 2, ...)]
    {\includegraphics[width=.49\linewidth]{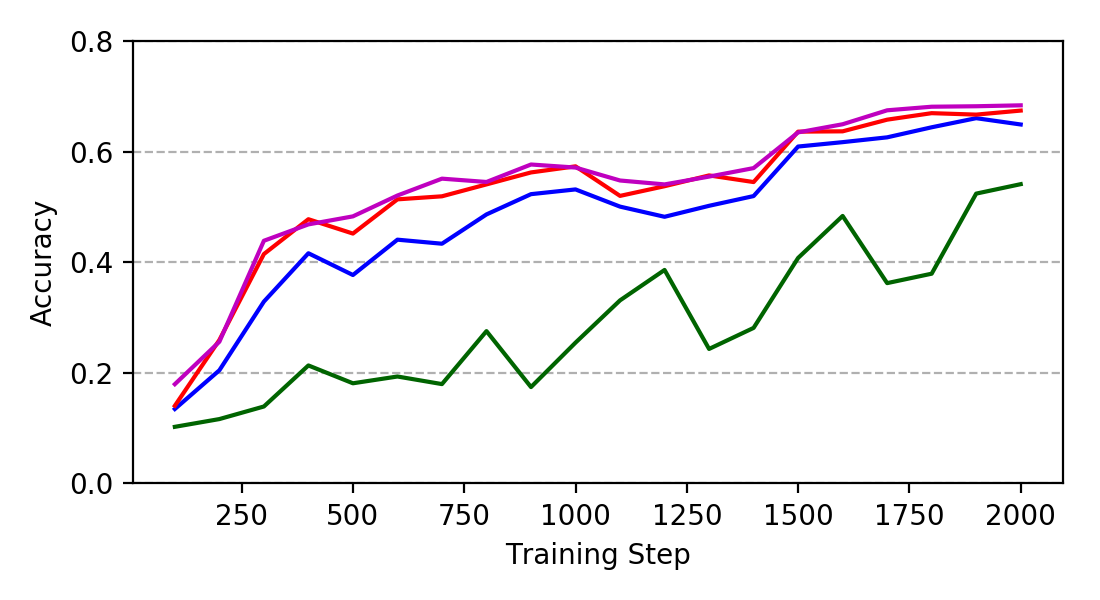}}
  \end{tabular}
  \caption{Accuracy vs training timestep in Fashion-MNIST}
\label{fig:fmnist_results}
\end{figure*}

In both Fig. 1 (a) and (b), SDRG empirically shows a faster convergence rate than importance weighted SGD and vanilla SGD. However, in Fig 1. (b) where the class to be skewed is changed after $m$ iterations, which might be a harder case than (b), the overall convergance rates flucture more than Fig, 1 (a), but SDRG shows more robust convergence than the other algorithms.


\paragraph{Fashion-MNIST}
Fashion-MNIST is an MNIST-like database of clothes. The images are grayscale, with size of 28 x 28 and associated with labels from 10 classes.
We evaluate the performance of SDRG in comparison with importance weighted SGD and vanilla SGD, under experimental settings which are exactly same with the MNIST experiment above.

In Fig 2. we can observe the same tendency in the convergence rate with Fig. 1, where SDRG shows less variant converging pattern for both of cases (a) and (b). 



\section{Conclusion and Future Work}

In this paper, we proposed a SGD algorithm that addresses the caual effects of covariates on the missingness of incomplete data.
Along with the previous studies that extended the use of doubly robust estimators to a variety of machine learning areas 
\cite{ReddiEtal.AAAI15,Kallus.arXiv18,DudikEtal.ICML11,JiangLi.ICML16,ThomasBrunskill.ICML16,FarajtabarEtal.ICML18}, this paper has been the first approach to apply the idea of doubly robust estimator to stochastic optimization.
In SDRG, employing control variates for regression adjustment allowed us to view the proposed method in the framework for variance reduced SGDs that also utilize control variate schemes, except that ours include the additional weight correction term.
For the efficiency in computing and storing control variates, we suggested to choose momentum function as control variates and by doing so, the direct derivation from SDRG to the momentum has been found as a byproduct.
In addition to these notable findings, empirical results have demonstrated that the proposed SDRG shows faster convergence than vanilla SGD and importance weighted SGD.

Future work includes further empirical studies to evaluate the performance of SDRG in various cases of missing data problem: for instance, the setting where multiple covariates are involved in the missingness mechanism. 
Also, theoretical analysis on convergence of SDRG should be provided. 
It would be essential to do deeper investigation on how the property of double robustness affects the convergence of SGD, compared to the existing variance reduced SGDs such as SAGA and SVRG.



\small
\bibliographystyle{abbrv}
\bibliography{paper_khlee}

\end{document}